\newtheorem{definition}{Definition}
\newtheorem{theorem}{Theorem}
\title{FedA$^3$I: Annotation Quality-Aware Aggregation for Federated Medical Image Segmentation against Heterogeneous Annotation Noise}
\author{
    %Authors
    % All authors must be in the same font size and format.
    Nannan Wu,
    Zhaobin Sun,
    Zengqiang Yan\thanks{Corresponding author},
    Li Yu
}
\title{My Publication Title --- Single Author}
\author {
    Author Name
}
\title{My Publication Title --- Multiple Authors}
\author {
    % Authors
    First Author Name\textsuperscript{\rm 1,\rm 2},
    Second Author Name\textsuperscript{\rm 2},
    Third Author Name\textsuperscript{\rm 1}
}
\begin{document}

\maketitle

\begin{abstract}
	
	Federated learning (FL) has emerged as a promising paradigm for training segmentation models on decentralized medical data, owing to its privacy-preserving property. However, existing research overlooks the prevalent annotation noise encountered in real-world medical datasets, which limits the performance ceilings of FL. In this paper, we, for the first time, identify and tackle this problem. For problem formulation, we propose a contour evolution for modeling non-independent and identically distributed (Non-IID) noise across pixels within each client and then extend it to the case of multi-source data to form a heterogeneous noise model (\textit{i.e.}, Non-IID annotation noise across clients). 
	For robust learning from annotations with such two-level Non-IID noise, we emphasize the importance of data quality in model aggregation, allowing high-quality clients to have a greater impact on FL. 
	To achieve this, we propose \textbf{Fed}erated learning with \textbf{A}nnotation qu\textbf{A}lity-aware \textbf{A}ggregat\textbf{I}on, named \textbf{FedA$^3$I}, by introducing a quality factor based on client-wise noise estimation. Specifically, noise estimation at each client is accomplished through the Gaussian mixture model and then incorporated into model aggregation in a layer-wise manner to up-weight high-quality clients. Extensive experiments on two real-world medical image segmentation datasets demonstrate the superior performance of FedA$^3$I against the state-of-the-art approaches in dealing with cross-client annotation noise.
	The code is available at https://github.com/wnn2000/FedAAAI.
	
\end{abstract}

\section{Introduction}

Accurate medical image segmentation is crucial as it is typically the first step in analytical workflows. Recent developments in segmentation suggest collecting massive data to train deep learning (DL) models \cite{ronneberger2015u,gu2019net,valanarasu2022unext}. However, growing privacy concerns in medical scenarios prohibit data transfer among sources \cite{rieke2020future,dou2021federated,wu2022federated,yan2020variation,li2021fedbn}, making it infeasible to construct datasets at scale. Fortunately, federated medical image segmentation (FMIS) \cite{liu2021feddg} based on federated learning (FL) \cite{mcmahan2017communication,huang2023generalizable,huang2023federated} offers a solution by enabling multiple medical institutions to train a unified segmentation model cooperatively without data sharing. As FMIS enriches training data in a privacy-preserving manner, it has emerged as a promising paradigm in practice.

\begin{figure}[!t] 
	\centering
	\includegraphics[width=\columnwidth]{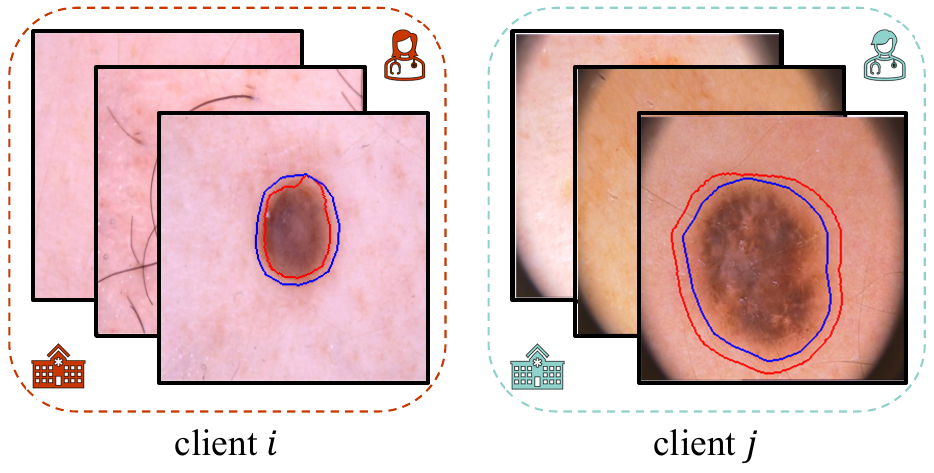}
	\caption{Annotation noise in multi-source datasets. The blue and red curves represent the contours of clean (\textit{i.e.}, ideal) and noisy annotations, respectively. Given a certain sample, annotations between two curves are noisy, indicating that noise is distributed near contours instead of being IID among pixels. Given samples from different clients, annotation noise is heterogeneous where noise on client $i$ causes under-segmentation and noise on client $j$ causes over-segmentation.}
	\label{background}
\end{figure}
Existing research on FMIS has mainly focused on distribution shift \cite{liu2021feddg,xu2022closing,wang2022personalizing,jiang2023iop,wang2023feddp}, client-side unfairness \cite{jiang2023fair}, and region-of-interest (ROI) inconsistency \cite{xu2023federated} caused by decentralized data. However, prevalent noisy annotations, referring to annotations containing errors, in medical datasets \cite{ji2021learning,zhang2020disentangling} have often been overlooked, which are inevitable in real-world medical datasets. On the one hand, annotating is subjective that can be biased by annotators' expertness (\textit{i.e.}, \textbf{preference}). For instance, front-line clinicians tend to mark ROIs larger to avoid missing positive parts (\textit{i.e.} tumors/lesions) in ambiguous regions \cite{calisto2021introduction,salehi2017tversky}, while non-professionals may only recognize well-characterized core areas, leading to smaller annotated regions. On the other hand, the level of annotator engagement can also affect the quality of annotations, resulting in random mislabeling (\textit{i.e.}, \textbf{randomness}) when annotators are distracted or fully occupied. It should be noted that noisy annotations can seriously degrade the performance of segmentation models under supervised learning paradigms \cite{liu2022adaptive}. Thus, addressing cross-client annotation noise in FMIS is an urgent task demanding insightful investigation.

Despite extensive studies on addressing related issues like
\begin{enumerate}
	\item Noisy labels for classification \cite{xu2022fedcorr,wu2023fednoro} and weak annotations for segmentation  \cite{wicaksana2022fedmix,zhu2023feddm} in FL,
	
	\item Noisy annotations for segmentation in centralized learning (CL) \cite{liu2022adaptive,yao2023learning},
\end{enumerate}
noisy annotations in FMIS may not be well addressed based on existing methods. 
It is because those methods fail to address the two-level Non-IID noise in annotations across both pixels and clients in FMIS, as shown in Fig. \ref{background}. Specifically, at the pixel level, annotation noise tends to be distributed in indistinct regions near contours, caused by inaccurate contour delineation, making it difficult to analyze the noise as the first type of studies mainly focuses on addressing relatively uniform noise within each classification instance. As for the client level, as annotation noise is highly annotator-dependent, noise is heterogeneous across clients due to non-overlapping annotators under the context of multi-source data in FL \cite{wu2023fednoro}. Such cross-client scenario makes it almost impossible to model annotation noise from a universal perspective, posing additional challenges when applying strategies from the second type of studies that primarily address homogeneous annotation noise. Consequently, how to effectively address noisy annotations in FMIS remains unexplored.

In this paper, we, for the first time, identify and formulate this new FMIS problem, \textit{i.e.}, how to train a federated segmentation model with noisy pixel-wise annotations. Different from weak annotations like bounding boxes studied in previous FMIS works \cite{wicaksana2022fedmix,zhu2023feddm}, noisy annotations studied in this paper are more common and realistic in clinical practice and thus more worthy of being addressed. 
To formulate annotation noise, inspired by the real-world annotating process of delineating ROI contours, we first propose a generic model to generate noisy annotations named contour evolution model (CEM). In CEM, noise is generated by making a clean contour evolve into a corrupted one, determined by two main parameters corresponding to annotators' preference and randomness. Following a CEM with certain parameters, annotation noise within a client/annotator is produced where noise among pixels is Non-IID. To generate heterogeneous annotation noise in multi-source data, different clients are assigned with CEMs with different parameters, resulting in Non-IID annotation noise across clients.
To address such two-level Non-IID annotation noise in FMIS, our insight resides in that model aggregation in FL should consider not only data quantity as FedAvg \cite{mcmahan2017communication} but also data quality (\textit{i.e.}, annotation quality in this paper). Therefore, we incorporate \textbf{Fed}erated learning with \textbf{A}nnotation qu\textbf{A}lity-aware \textbf{A}ggregat\textbf{I}on, named \textbf{FedA$^3$I}, for noise-robust FMIS. In FedA$^3$I, the annotation noise of each client is estimated by the Gaussian mixture model (GMM) with the learning difficulty of regions around contours, which is used to compute the quality-based aggregation weight. This step is conducted on the region-level rather than the pixel-level to mitigate the impact of Non-IID noise across pixels on noise analysis. After obtaining clients' quality- and quantity-based (\textit{i.e.}, FedAvg) weights, instead of directly adjusting the corresponding client-wise weights for model-wise aggregation \cite{liang2022rscfed,wicaksana2022fedmix,wu2023fednoro}, we dynamically combine the two weights in a layer-wise manner \cite{ma2022layer}. To be specific, quantity-based weights contribute more to updating shallower layers while quality-based weights dominate the updates of deeper layers. It is motivated by the consensus that noisy labels have a more severe impact on deep layers \cite{bai2021understanding}.
The main contributions are as follows:

\begin{itemize}
	
	\item A new FMIS problem where two-level Non-IID noise is identified to be addressed for the first time.
	
	\item A generic annotation noise model, named CEM, to formulate an annotator's preference and randomness, and a heterogeneous annotation noise model to formulate cross-annotator/-client annotation noise based on CEMs with different parameters.
	
	\item A novel FL framework FedA$^3$I to address two-level Non-IID noise in FMIS, where both data quantity and quality are included for layer-wise model aggregation.
	
	\item Superior performance against the state-of-the-art (SOTA) approaches validated on two publicly-available real-world medical datasets.
	
\end{itemize}

\begin{figure*}[!t] 
	\centering
	\includegraphics[width=1\textwidth]{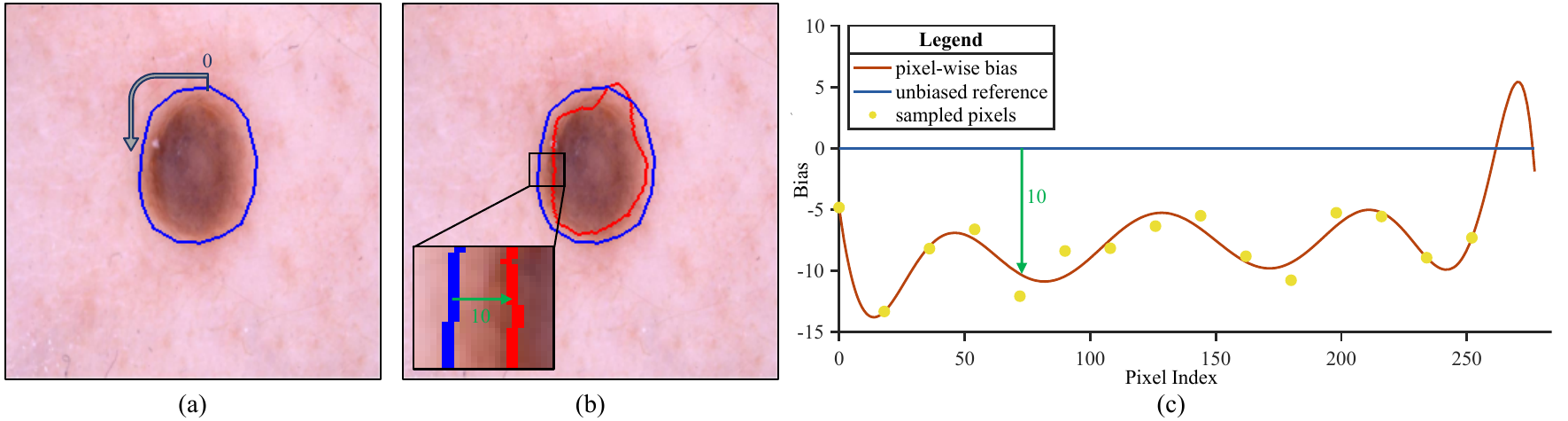}
	\caption{An example of using CEM (\textit{i.e.}, $C(-8, 2)$) to generate annotation noise. In (a), the arrow indicates the positive annotation direction starting from the origin (\textit{i.e.}, 0) of the contour (\textit{i.e.}, the blue curve). Then, the bias of each pixel is calculated based on the polynomial function fitted by sampled pixels as illustrated in (c), and used to control the movements of pixels of the contour as indicated by the green arrow, resulting in the noisy contour (\textit{i.e.}, the red curve) in (b).}
	\label{cem}
\end{figure*}

\section{Related Work}

\subsection{Federated Medical Image Segmentation}

FMIS has shown great potential for decentralized medical datasets under privacy preservation. Existing research on FMIS mainly emphasizes addressing data heterogeneity and its derived problems, \textit{e.g.}, distribution shift \cite{liu2021feddg,xu2022closing,wang2022personalizing,jiang2023iop,wang2023feddp}, inconsistent ROIs \cite{xu2023federated} and inter-client unfairness \cite{jiang2023fair} Recently, some studies \cite{wicaksana2022fedmix,zhu2023feddm} address weak annotations in FMIS. FedMix \cite{wicaksana2022fedmix} explores how to learn from mixed-type annotations including complete and weak annotations, while FedDM \cite{zhu2023feddm} deals with typical bounding box annotations. 

\subsection{Learning with Noisy Labels/Annotations}

It is important to design noise-robust training strategies given the inevitable mislabeling phenomenon. Existing studies on this are conducted under both CL \cite{han2018co,yu2019does,wang2019symmetric,zhang2018generalized,wang2020noise,liu2020early,liu2022adaptive,yao2023learning,fang2023reliable,lidividemix} and FL paradigms \cite{xu2022fedcorr,wu2023fednoro,jiang2022towards}, mainly based on sample selection \cite{han2018co,yu2019does}, noise-robust loss functions \cite{wang2019symmetric,zhang2018generalized,wang2020noise}, noise-robust regularization \cite{liu2020early,jiang2022towards}, label correction \cite{liu2022adaptive,xu2022fedcorr,yao2023learning}, label softening \cite{wu2023fednoro,fang2023reliable}, semi-supervised learning \cite{lidividemix}, \textit{etc}.

\section{Methodology}

\subsection{Preliminaries}

For simplification, this paper focuses on the typical binary segmentation problem\footnote{This can be easily generalized to multi-class segmentation by studying each class separately.} following previous studies \cite{wicaksana2022fedmix,zhu2023feddm}. Given $K$ participants in FL, each client $k$ holds a private dataset $D_k=\{(x_i \in \mathcal{X} \subseteq \mathbb{R}^{\rm H \times W \times C}, \tilde{y}_i \in \mathcal{Y}=\{0,1\}^{\rm H \times W})\}_{i=1}^{n_k}$, where $n_k$ represents the size of $D_k$, and $(x_i, \tilde{y}_i)$ denotes an available image-annotation pair with the height, width, and channel dimensions of $\rm H$, $\rm W$, and $\rm C$, respectively. Note that the annotation $\tilde{y}_i$ is noisy due to the inevitable mislabeling phenomenon, described as $\tilde{y}_i = y_i \oplus \xi_i$, where $y_i$ is the underlying true annotation, $\xi_i \in \mathcal{Y}$ represents the pixel-wise noise, and $\oplus$ is the pixel-wise binary XOR operation. In our work, we focus on two-level Non-IID noise, \textit{i.e.}, the pixel level and the client level. Our goal is to develop a noise-robust FL algorithm to train a binary segmentation model $f_\theta: \mathcal{X} \rightarrow \mathbb{R}^{\rm H \times W}$ parameterized by $\theta$.

\subsection{Noise Model} \label{sec:NoiseModel}

To formulate the problem, we first present a noise model to simulate pixel-wise annotations and then generalize it to the case of multi-source datasets.

\subsubsection{Contour Evolution Model}

Inspired by the clinical practice where annotating is typically performed by delineating ROI contours and noise is essentially introduced by biased delineation, we propose a contour evolution model (CEM) to describe annotation noise as a biased contour. 
In this way, contour bias can be quantified as a pixel bias sequence $\langle b_k \rangle_{k=1}^l = \langle b_1, b_2, \dots, b_l\rangle$, where $l$ is the number of contour pixels. Each pixel moves outward (with a positive bias) or inward (with a negative bias) along the normal direction according to the absolute value of its bias, forming a corrupted contour. To further analyze the bias, we decompose it into two orthogonal components as $b_k = c+r_k$, where $c$ is a fixed component and $r_k$ is the $k$-th element of a random component $\langle r_k \rangle_{k=1}^l$ with $\sum_{k=1}^l r_k = 0$. These two components reflect two real-world factors contributing to the bias, namely preference and randomness. In clinical practice, annotators may draw contours of different sizes based on expertise and preference \cite{zhang2020disentangling}, resulting in annotated objects being larger (\textit{i.e.} $c>0$) or smaller (\textit{i.e.}, $c<0$) than ground truth. As for randomness, annotators may make mistakes, resulting in random distortion being introduced along contours (\textit{i.e.}, $r_k$).

Based on the analysis above, we can generate annotator-specific annotation noise by controlling the two components. Specifically, given $l$ contour pixels, a subset of $l_{sub}$ pixels are selected with equal intervals, and $l_{sub}$ values are sampled from a normal distribution $\bm{N}(\mu,\sigma^2)$. Then, $l_{sub}$ values, along with their $l_{sub}$ indices, are employed to fit a $p$-th degree polynomial function $\mathscr{P}$ that is used to recompute the bias sequence with indices, \textit{i.e.}, $\langle b_k \rangle_{k=1}^l = \langle \mathscr{P}(1), \mathscr{P}(2), \dots, \mathscr{P}(l) \rangle$. With this design, a pixel-wise bias with a fixed component approximately equal to $\mu$ and a random component controlled by $\sigma$ is generated. In addition, the bias sequence $\langle b_k \rangle_{k=1}^l$ is continuous, ensuring contour continuity and smoothness. Each contour pixel is then moved along the normal direction to a new position based on the corresponding element in $\langle b_k \rangle_{k=1}^l$, resulting in annotator-specific noise controlled by $(\mu, \sigma)$. For convenience, we denote such a contour evolution model as $C(\mu, \sigma)$ as illustrated in Fig. \ref{cem}.

Compared to existing noise models like class-conditional noise (CCN) \cite{han2018co,yu2019does,chen2021beyond}, morphological operation noise (MON) \cite{zhang2020disentangling,liu2022adaptive}, and Markov label noise (MLN) \cite{yao2023learning}, CEM generates more realistic annotation noise as it directly simulates the real-world annotating process. Additionally, CEM considers the Non-IID nature of noise at the pixel level more comprehensively. To illustrate this, we revisit different noise models from the pixel-dependent noise (PDN) perspective by introducing the signed distance function \cite{yao2023learning}.
\begin{definition}[PDN] \label{PDN}
	\normalfont For an underlying clean annotation, denote the set of all pixels as $\mathcal{A}$ and the signed distance function as $\mathscr{D}: \mathcal{A} \rightarrow \mathbb{R}$. Assume the imposed noise $\xi$ is drawn from a random matrix $\mathbf{\Xi}=(\Xi_{i,j})$.
	%	where the element in the $i$-th row and $j$-th column is denoted as $\Xi_{ij}$. 
	$\xi$ is pixel-dependent if
	\begin{enumerate}
		\item $\exists \epsilon>0$, s.t. $\sum_{(i,j) \in \mathcal{I}} \frac{\mathbb{E}[\Xi_{i,j}] }{|\mathcal{I}|}>\sum_{(i,j) \in \mathcal{O}} \frac{\mathbb{E}[\Xi_{i,j}]}{|\mathcal{O}|}$, where $\mathcal{I}=\{(i,j) \in \mathcal{A} \mid |\mathscr{D}((i,j))| < \epsilon\}$, and $\mathcal{O}=\mathcal{A} \setminus \mathcal{I}$.
		
		\item $\exists \delta \in \mathbb{R}$ and two different pixels $(u_1,v_1),(u_2,v_2) \in \mathcal{A}$, satisfying $\mathscr{D}((u_1,v_1))=\mathscr{D}((u_2,v_2))=\delta$, s.t. $\Xi_{u_1,v_1}$ and $\Xi_{u_2,v_2}$ are non-identically distributed.
	\end{enumerate}
\end{definition}
The first condition ensures that noise is localized near contours, and the second condition ensures that noise is heterogeneous along contours. PDN indicates that noise in pixel-wise annotations is Non-IID across both the normal and tangential directions of contours, which fits real-world scenarios due to the following two reasons. First, when an annotator segments an object by marking its contours, noise is introduced to the regions between the marked and the underlying clean contours. That is, from the normal direction, noise mainly exists around contours. Second, the difficulty of marking different contour parts could vary since medical objects often lack isotropy. As a consequence, Non-IID noise along the tangential direction could arise. However, existing noise models (\textit{e.g.}, CCN, MON, MLN) fail to guarantee pixel-dependent due to the strong assumption of noise distribution.
%\textit{i.e.}, CCN does not satisfy either of the two conditions, while MON and MLN do not satisfy the second condition, making the produced noise easier to address but less realistic. 
Comparatively, CEM well satisfies the above conditions by formulating annotation noise as a biased contour and altering the bias of different contour parts flexibly. More discussions and proof can be found in Appendix\footnote{The appendix is included in the arXiv version, which can be found in https://github.com/wnn2000/FedAAAI} A.

\subsubsection{Heterogeneous Noise Model with CEMs}

In FMIS, annotators are non-overlapping across clients, resulting in heterogeneous noise due to its annotator-dependent nature. Based on CEM to generate annotator-specific noise, we present a heterogeneous noise model to emulate real-world noise in multi-source datasets by assigning CEMs with different parameters to clients. For each client $i$, we use a specific CEM $C(\mu_i, \sigma_i)$ to generate its annotation noise by setting
\begin{equation} 
	\mu_i =
	\left\{
	\begin{aligned}
		\mu \sim \bm{U}(0, \mu^{max}), \quad&\text{ with probability } p_d  \\
		\mu \sim \bm{U}(\mu^{min}, 0), \quad&\text{ with probability } 1-p_d,
	\end{aligned}
	\right.
\end{equation}
and $\sigma_i = \sigma \sim \bm{U}(\frac{\sigma^{max}}{2}$, $\sigma^{max})$, where $\bm{U}$ denotes the uniform distribution. In this way, different clients can generate heterogeneous noise according to client-dependent CEMs. Such a multi-source noise model is denoted as $M(\mu^{max}, \mu^{min}, \sigma^{max}, p_d)$ for simplification.

\subsection{Annotation Quality-Aware Aggregation}

\subsubsection{Motivation and Overview}

When adopting FedAvg \cite{mcmahan2017communication} for FMIS, the optimization objective can be formulated as
\begin{equation} \label{FedAvg}
	\min_\theta [F(\theta) := \frac{\sum_{i=1}^{K} n_i F_i(\theta, D_i)}{\sum_{i=1}^{K} n_i}],
\end{equation}
where $F_i(\theta, D_i)$ represents the local objective function of client $i$. The contribution of each client to the global model is determined solely by its data amount $n_i$ in FedAvg. Consequently, the global model can be negatively diverted by clients with large amounts of noisy annotations. To address this, we propose FedA$^3$I to make global model aggregation quality-dependent, thereby making high-quality clients dominate the training process as illustrated in Fig. \ref{Framework}. Details are presented as follows.

\begin{figure}[!t] 
	\centering
	\includegraphics[width=\columnwidth]{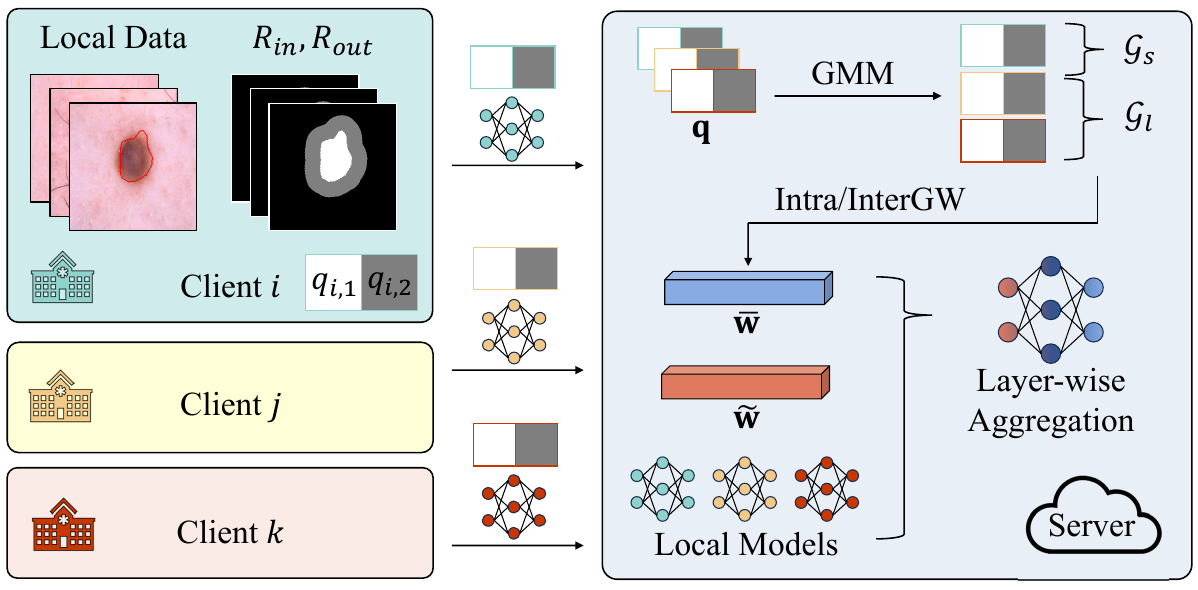}
	\caption{Overview of FedA$^3$I. In each client $i$, we compute the learning difficulty of the inner and outer regions (the white and gray regions around contours respectively), denoted as $q_{i,1}$ and $q_{i,2}$, and upload them to the server. These indicators are used to fit a Gaussian mixture model (GMM), which divides all clients into two subsets. Based on this, we compute the quality-based weights using two components, namely IntraGW and InterGW. Finally, both quality-based and quantity-based weights are utilized for model aggregation in a layer-wise manner.}
	\label{Framework}
\end{figure}

\subsubsection{Noise Estimation}

We analyze annotation quality by examining the noise in each client. As annotation noise is heterogeneous across clients and annotated objects can be either larger or smaller than ground truth, the global model typically learns an intermediate pattern that is relatively clean for a faster drop in loss minimization during the early stage of training. This phenomenon, known as early learning \cite{liu2022adaptive,lidividemix,liu2020early}, allows us to obtain a model that is relatively free from noise and has preliminary segmentation ability for noise estimation. Specifically, assuming an early model $f_{\theta_e}$ warmed up by FedAvg with $T_1$ rounds, we first calculate the client-wise learning difficulty $\mathbf{q} = (q_{i,j}) \in \mathbb{R}^{K \times 2}$ with

\begin{equation} \label{difficulty1}
	q_{i,1} = \frac{1}{n_i} \sum_{(x,\tilde{y}) \in D_i} \sum_{(u,v) \in {R}_{in}(\tilde{y})}  \frac{\ell_{ce}(f_{\theta_e}(x), \tilde{y})_{u,v}}{{|{R}_{in}(\tilde{y})|}}
\end{equation}
and
\begin{equation} \label{difficulty2}
	q_{i,2} = \frac{1}{n_i} \sum_{(x,\tilde{y}) \in D_i} \sum_{(u,v) \in {R}_{out}(\tilde{y})} \frac{\ell_{ce}(f_{\theta_e}(x), \tilde{y})_{u,v}}{{|{R}_{out}(\tilde{y})|}},
\end{equation}
where $\ell_{ce}$ is the pixel-wise cross entropy loss. ${R}_{in}(\tilde{y})$ and ${R}_{out}(\tilde{y})$ denote the inner and outer regions, starting from the contour of $\tilde{y}$ to a distance $d$. Here, $d$ is set as the maximal distance determined by gradually increasing $d$ till either $|{R}_{in}(\tilde{y})|$ or $|{R}_{out}(\tilde{y})|$ no longer changes. In other words, we localize the largest regions that are most likely to contain annotation noise and calculate the corresponding average losses via Eqs. \ref{difficulty1} and \ref{difficulty2}. The two loss values reflect the degree of divergence between $f_{\theta_e}$'s predictions and annotations around contours. It should be noted that using the largest possible regions is to mitigate inaccurate loss values due to wrongly-predicted pixels made by the early model $f_{\theta_e}$, while focusing on symmetric regions (\textit{i.e.}, $|{R}_{in}(\tilde{y})|$ and $|{R}_{out}(\tilde{y})|$) can alleviate the impact of imbalanced sizes between inner and outer regions on divergence estimation (\textit{e.g.}, a larger region will be less divergent as regions distant from contours are more likely to be predicted correctly). 
Thanks to the roughly accurate segmentation results from $f_{\theta_e}$, a larger value of $q_{i,1}$ for client $i$ suggests that it may have larger human-annotated objects, while a larger value of $q_{i,2}$ implies the opposite. Therefore, noise in each client can be estimated by comparing its degree of divergence with that of the whole in globe. To this end, we compute $q_{i,1}$ and $q_{i,2}$ in each client and upload them to the server for obtaining $\mathbf{q}$. Given $\mathbf{q}$, a two-component Gaussian mixture model (GMM) is used to fit $\mathbf{q}$ and classify all clients' indices into two groups: $\mathcal{G}_l$ and $\mathcal{G}_s$, where $\mathcal{G}_l$ and $\mathcal{G}_s$ represent the indices of clients corresponding to Gaussian distributions with mean vectors $(\mu_{l,1},\mu_{l,2})^T$ and $(\mu_{s,1},\mu_{s,2})^T$ respectively under the constraint of $\mu_{l,1}-\mu_{l,2} > \mu_{s,1}-\mu_{s,2}$. In this way, clients in two groups own different patterns of noise. Client $i$ ($i \in \mathcal{G}_l$) tends to annotate objects larger due to greater inconsistency between annotations and model predictions in inner regions, while client $j$ ($j \in \mathcal{G}_s$) tends to annotate objects smaller. Note that this process is privacy-preserving as only average loss values are shared without leaking any information about local data distributions.
After client grouping, noise strength can be estimated in each group by 
\begin{equation} \label{strength}
	s_i = 
	\left\{
	\begin{aligned}
		&q_{i,1} - q_{i,2}, \text{ for } i \text{ in } \mathcal{G}_l,   \\
		&q_{i,2} - q_{i,1}, \text{ for } i \text{ in } \mathcal{G}_s,
	\end{aligned}
	\right.
\end{equation}
\textit{i.e.}, the divergence difference between inner and outer regions. In each group, a greater $s_i$ indicates stronger noise in client $i$.

\subsubsection{Aggregation Weight Calculation}

As discussed above, quantity-based aggregation is vulnerable to noise. To address this, clients with high-quality annotations should be given more attention, which is achievable with the help of noise estimation above. Therefore, we compute a quality-based aggregation weight $\overline{w}_i$ for each client $i$, defined as:
\begin{equation} \label{weight}
	\overline{w}_i = 
	\left\{
	\begin{aligned}
		r \cdot & \frac{\max\limits_{j \in \mathcal{G}_l}s_j - s_i}{|\mathcal{G}_l| \max\limits_{j \in \mathcal{G}_l}s_j - \sum_{t \in \mathcal{G}_l} s_t}, \text{ for } i \text{ in } \mathcal{G}_l,   \\
		(1-r) \cdot & \frac{\max\limits_{j \in \mathcal{G}_s}s_j - s_i}{|\mathcal{G}_s| \max\limits_{j \in \mathcal{G}_s}s_j - \sum_{t \in \mathcal{G}_s} s_t}, \text{ for } i \text{ in } \mathcal{G}_s,
	\end{aligned}
	\right.
\end{equation}
where $r \in [0,1]$ is a balancing coefficient set as 0.5 by default. In this way, a client with higher noise strength is given a smaller weight in each group, denoted as intra-group weighting (IntraGW). However, in real-world scenarios, $\mathcal{G}_l$ and $\mathcal{G}_s$ may differ significantly in size (\textit{i.e.}, $|\mathcal{G}_l|$ and $|\mathcal{G}_s|$), and simply weighting clients within each group can still make the model misled by the majority noise. To address this, we introduce another parameter $r$ to balance the weights between the two groups, denoted as inter-group weighting (InterGW). Our objective is to enable the global model to learn fairly from both annotation noise types, allowing it to learn accurate knowledge that goes beyond noise rather than over-fitting to the majority noise.

Given quality-based weights $\overline{\mathbf{w}}=(\overline{w}_1, \dots, \overline{w}_K)$, together with quantity-based weights $\tilde{\mathbf{w}}=(\tilde{w}_1, \dots, \tilde{w}_K)=\frac{1}{\sum_{i=1}^{K}n_i}(n_1, \dots, n_K)$, one straight-forward way for noise-robust aggregation is to use $\overline{\mathbf{w}}$ to adjust $\tilde{\mathbf{w}}$ for client-wise re-weighting \cite{liang2022rscfed,wicaksana2022fedmix,wu2023fednoro}. However, this method may be sub-optimal as it greatly underestimates the value of low-quality data for model training, especially in shallow layers. As discussed in \cite{bai2021understanding}, noisy labels would affect deep layers more compared to shallow layers. It is because the shallow layers of a model mainly learn general low-level features while the deep layers integrate semantic information for decision-making. In other words, low-quality data can still contribute to the training of shallow layers to maximize the utilization of training data, while high-quality data should dominate the training of deep layers to ensure that the decision-making process is not biased by noise. Therefore, we calculate a noise-robust aggregation weight $w_{i,j}$ for any client $i$ when updating the $j$-th layer of the global model, defined as
\begin{equation} \label{layer}
	w_{i,j} =\mathscr{L}(j)\overline{w}_i + (1-\mathscr{L}(j))\tilde{w}_i, j \in [L],
\end{equation}
where $L$ denotes the total number of layers in the trained model. $\mathscr{L}(j):=\frac{j-1}{L-1}$ is a layer-wise trade-off coefficient which increases linearly as $j$ increases to let $\tilde{w}_i$ and $\overline{w}_i$ dominate the updates of shallow and deep layers respectively. The layer-wise aggregation weight is employed in the subsequent process for noise-robust FL.

\section{Experiments}

\begin{table}[!t]
	\centering
	\renewcommand{\arraystretch}{0.9}
	\resizebox{\columnwidth}{!}{
		\begin{tabular}{c|cc}
			\toprule
			\hline
			\multirow{2}{*}{Noise Type} & \multicolumn{2}{c}{Dataset}              \\ \cline{2-3} 
			& \textit{SKIN}                & \textit{BREAST}             \\ \hline
			$N_S$                        & $M(20, -20, 10, 0.2)$ & $M(25, -15, 5, 0.2)$ \\
			$N_E$                        & $M(25, -25, 10, 0.8)$ & $M(25, -15, 5, 0.8)$ \\ \hline
			\bottomrule
	\end{tabular}}
	\caption{Details of two annotation noise settings.}
	\label{tab:noisesetting}
\end{table}

\begin{table*}[!t]
	\centering
	\renewcommand{\arraystretch}{0.9}
	\begin{tabular}{l|l|l|cccc}
		\toprule
		\hline
		\multirow{4}{*}{Loss}        & \multirow{4}{*}{From} & \multirow{4}{*}{Method} & \multicolumn{4}{c}{Dataset}                                                                                \\ \cline{4-7} 
		&                       &                         & \multicolumn{2}{c|}{\textit{SKIN}}                             & \multicolumn{2}{c}{\textit{BREAST}}       \\ \cline{4-7} 
		&                       &                         & \multicolumn{4}{c}{Noise Type}                                                                         \\ \cline{4-7} 
		&                       &                         & $N_S$  & \multicolumn{1}{c|}{$N_E$}  & $N_S$   & $N_E$   \\ \hline
		\multirow{10}{*}{\textit{CE}} & NeurIPS'18            & GCE                     & $66.43 \pm 2.21$          & \multicolumn{1}{c|}{$73.47 \pm 0.37$}          & $65.67 \pm 1.51$          & $67.12 \pm 0.95$          \\
		& CVPR'19               & SCE                     & $62.49 \pm 4.31$          & \multicolumn{1}{c|}{$74.69 \pm 0.32$}          & $67.71 \pm 1.37$          & $67.73 \pm 0.79$          \\
		& NeurIPS'20            & ELR                     & $65.46 \pm 1.43$          & \multicolumn{1}{c|}{$74.31 \pm 0.41$}          & $65.65 \pm 1.29$          & $67.93 \pm 0.81$          \\
		& CVPR'22               & ADELE                   & $62.99 \pm 3.63$          & \multicolumn{1}{c|}{$70.62 \pm 0.95$}          & $61.79 \pm 1.29$          & $70.15 \pm 2.41$          \\
		& TMI'23                & RMD                     & $62.75 \pm 2.60$          & \multicolumn{1}{c|}{$75.41 \pm 0.37$}          & $64.12 \pm 2.64$          & $68.09 \pm 1.06$          \\
		& MLSys'20              & FedProx                 & $64.35 \pm 1.50$          & \multicolumn{1}{c|}{$75.76 \pm 0.59$}          & $68.10 \pm 1.09$          & $66.55 \pm 1.01$          \\
		& TMI'23                & FedDM                   & $42.44 \pm 0.77$          & \multicolumn{1}{c|}{$67.92 \pm 2.32$}          & $40.76 \pm 2.91$          & $64.16 \pm 3.58$          \\
		& CVPR'22               & FedCorr                 & $61.88 \pm 2.19$         & \multicolumn{1}{c|}{$74.60 \pm 1.17$}        & $67.15 \pm 1.39$       & $68.43 \pm 1.10$                    \\
		& IJCAI'23              & FedNoRo                 & $66.35 \pm 4.62$         & \multicolumn{1}{c|}{$74.22 \pm 0.30$}     & $59.51 \pm 1.28$       & $66.29 \pm 5.03$                    \\
		& Ours                  & FedA$^3$I & $\mathbf{72.69 \pm 1.28}$ & \multicolumn{1}{c|}{$\mathbf{78.81 \pm 0.75}$} & $\mathbf{69.03 \pm 1.95}$ & $\mathbf{74.35 \pm 1.90}$ \\ \hline
		\multirow{3}{*}{\textit{DC}}  & TMI'20                & NR-Dice                 & $73.33 \pm 1.38$          & \multicolumn{1}{c|}{$73.75 \pm 0.68$}          & $74.02 \pm 0.64$          & $65.46 \pm 0.58$          \\
		& TMI'22                & FedMix                  & $67.99 \pm 1.78$          & \multicolumn{1}{c|}{$73.00 \pm 1.22$}          & $72.99 \pm 0.15$          & $69.04 \pm 1.92$          \\
		& Ours                  & FedA$^3$I & $\mathbf{75.92 \pm 0.99}$ & \multicolumn{1}{c|}{$\mathbf{77.05 \pm 0.56}$} & $\mathbf{75.28 \pm 0.53}$ & $\mathbf{76.68 \pm 0.49}$ \\ \hline
		\bottomrule
	\end{tabular}
	\caption{Comparison evaluation measured in the mean (\%) $\pm$ standard deviation (\%) of Dice coefficients against SOTA methods under different annotation noise settings. The best results are marked in bold.}
	\label{tab:sota}
\end{table*}

\subsection{Datasets and Evaluation Metric}

Following prior studies addressing weak/noisy annotations \cite{wicaksana2022fedmix,yao2023learning}, two public medical image segmentation datasets are adopted, including
\begin{enumerate}
	\item The widely-used ISIC 2017 skin lesion segmentation dataset \cite{codella2018skin}, denoted as \textit{SKIN}, contains 2000 images for training and 600 images for testing.
	
	\item Three real-world breast ultrasound datasets for breast tumor segmentation, denoted as \textit{BREAST}, including BUS \cite{al2020dataset}, BUSIS \cite{zhang2022busis}, and UDIAT \cite{yap2017automated}. We perform stratified sampling to split each of the three datasets into a training set and a test set with an 8:2 ratio.
\end{enumerate}
Following standard practice in dealing with noisy annotations \cite{yao2023learning,zhu2019pick,zhang2020robust,li2021superpixel}, all images in \textit{SKIN} and \textit{BREAST} are resized to 256 $\times$ 256 pixels in resolution. The Dice coefficient, as the standard metric for medical image segmentation, is employed for performance evaluation.

\subsection{Experimental Setup}

\subsubsection{Data Partition and Noise Generation}

The training set of each dataset is partitioned into several subsets to simulate FL scenarios. Specifically, we randomly split both \textit{SKIN} and \textit{BREAST} in 50 subsets/clients. For \textit{BREAST}, a client can only contain data from one of BUS, BUSIS, or UDIAT, and thus data across clients is Non-IID due to various sources.

For detailed evaluation, we use the proposed heterogeneous noise model to generate two types of annotation noise, \textit{i.e. } $N_E$ with $p_d > 0.5$ and $N_S$ with $p_d < 0.5$. $N_E$ indicates that the majority of clients tend to annotate objects larger, while $N_S$ represents the opposite. Detailed annotation noise settings are stated in Tab. \ref{tab:noisesetting}.

\subsubsection{Implementation Details}

We select U-Net \cite{ronneberger2015u} as the segmentation model. For FL training, the local epoch and the maximal communication round are set as 5 and 100 respectively. During local training, the model is trained by the Adam optimizer with momentum terms of (0.9, 0.99), a batch size of 8, and a constant learning rate of 5e-3. The warm-up round $T_1$ is set as 10 and 30 by default for \textit{SKIN} and \textit{BREAST} respectively. In each experiment, the training process is independently repeated five times, and the averaged mean and standard deviation of Dice coefficients are reported to eliminate randomness. All experiments are conducted using PyTorch on a NVIDIA RTX-3090 GPU with 24 GB of memory.

\subsection{Comparison with SOTA Methods} \label{sec:compare}

For a comprehensive evaluation of FedA$^3$I on addressing heterogeneous annotation noise, a set of SOTA methods are selected for comparison, including GCE \cite{zhang2018generalized}, SCE \cite{wang2019symmetric}, ELR \cite{liu2020early}, ADELE \cite{liu2022adaptive}, RMD \cite{fang2023reliable}, NR-Dice \cite{wang2020noise}, FedProx \cite{FedProx}, FedMix \cite{wicaksana2022fedmix}, FedDM \cite{zhu2023feddm}, FedCorr \cite{xu2022fedcorr}, and FedNoRo \cite{wu2023fednoro}. The introduction and implementation details of these methods can be found in Appendix B.

The above methods can be divided into two groups according to the used loss functions: Cross entropy loss-based methods (denoted as \textit{CE}) and Dice loss-involved methods (denoted as \textit{DC}). For a fair comparison, FedA$^3$I is implemented by two types of loss functions separately, namely 1) cross-entropy loss and 2) a combination of cross-entropy loss and Dice loss \cite{milletari2016v}. Quantitative comparison results on both datasets under two noise settings are summarized in Tab. \ref{tab:sota}. Compared to other methods, FedA$^3$I achieves the best performance across different datasets and settings, validating its effectiveness in handling heterogeneous annotation noise. It confirms the necessity of specific designs for the two-level Non-IID annotation noise in FMIS. In addition, different from those methods introducing complex correction \cite{xu2022fedcorr,liu2022adaptive,zhu2023feddm} or regularization \cite{liu2020early,wu2023fednoro}, no additional training overhead is introduced in FedA$^3$I, making it more easy-to-deploy in practice.

\begin{table*}[!t]
	\centering
	\renewcommand{\arraystretch}{0.9}
	\begin{tabular}{l|ccc|cccc}
		\toprule
		\hline
		\multicolumn{1}{l|}{\multirow{4}{*}{Loss}} & \multicolumn{3}{c|}{\multirow{3}{*}{Component}}                                   & \multicolumn{4}{c}{Dataset}                                               \\ \cline{5-8} 
		\multicolumn{1}{l|}{}                       & \multicolumn{3}{c|}{}                                                             & \multicolumn{2}{c|}{\textit{SKIN}}                    & \multicolumn{2}{c}{\textit{BREAST}} \\ \cline{5-8} 
		\multicolumn{1}{l|}{}                       & \multicolumn{3}{c|}{}                                                             & \multicolumn{4}{c}{Noise Type}                                            \\ \cline{2-8} 
		\multicolumn{1}{l|}{}                       & FedAvg                    & IntraGW                   & InterGW                   & $N_S$      & \multicolumn{1}{c|}{$N_E$}      & $N_S$        & $N_E$       \\ \hline
		\multirow{3}{*}{\textit{CE}}                         & \checkmark &                           &                           & $63.85 \pm 2.41$ & \multicolumn{1}{c|}{$75.69 \pm 0.56$} & $65.33 \pm 1.03$   & $68.49 \pm 0.67$  \\
		& \checkmark & \checkmark &                           & $67.45 \pm 1.55$ & \multicolumn{1}{c|}{$77.33 \pm 0.18$} & $67.87 \pm 0.87$   & $69.26 \pm 1.80$  \\
		& \checkmark & \checkmark & \checkmark & $\mathbf{72.69 \pm 1.28}$ & \multicolumn{1}{c|}{$\mathbf{78.81 \pm 0.75}$} & $\mathbf{69.03 \pm 1.95}$   & $\mathbf{74.35 \pm 1.90}$  \\ \hline
		\multirow{3}{*}{\textit{DC}}                         & \checkmark &                           &                           & $73.50 \pm 1.24$ & \multicolumn{1}{c|}{$75.06 \pm 0.41$} & $72.31 \pm 1.11$   & $67.27 \pm 0.70$  \\
		& \checkmark & \checkmark &                           & $74.59 \pm 1.03$ & \multicolumn{1}{c|}{$76.70 \pm 0.52$} & $74.49 \pm 0.94$   & $68.31 \pm 0.46$  \\
		& \checkmark & \checkmark & \checkmark & $\mathbf{75.92 \pm 0.99}$ & \multicolumn{1}{c|}{$\mathbf{77.05 \pm 0.56}$} & $\mathbf{75.28 \pm 0.53}$   & $\mathbf{76.68 \pm 0.49}$  \\ \hline
		\bottomrule
	\end{tabular}
	\caption{Component-wise evaluation measured in the mean (\%) $\pm$ standard deviation (\%) of Dice coefficients under different annotation noise settings. The best results are marked in bold.}
	\label{tab:ablation1}
\end{table*}

\subsection{Ablation Study}

To analyze the effectiveness of FedA$^3$I, a series of in-depth ablation studies are conducted.

\begin{figure}[!t] 
	\centering
	\includegraphics[width=\columnwidth]{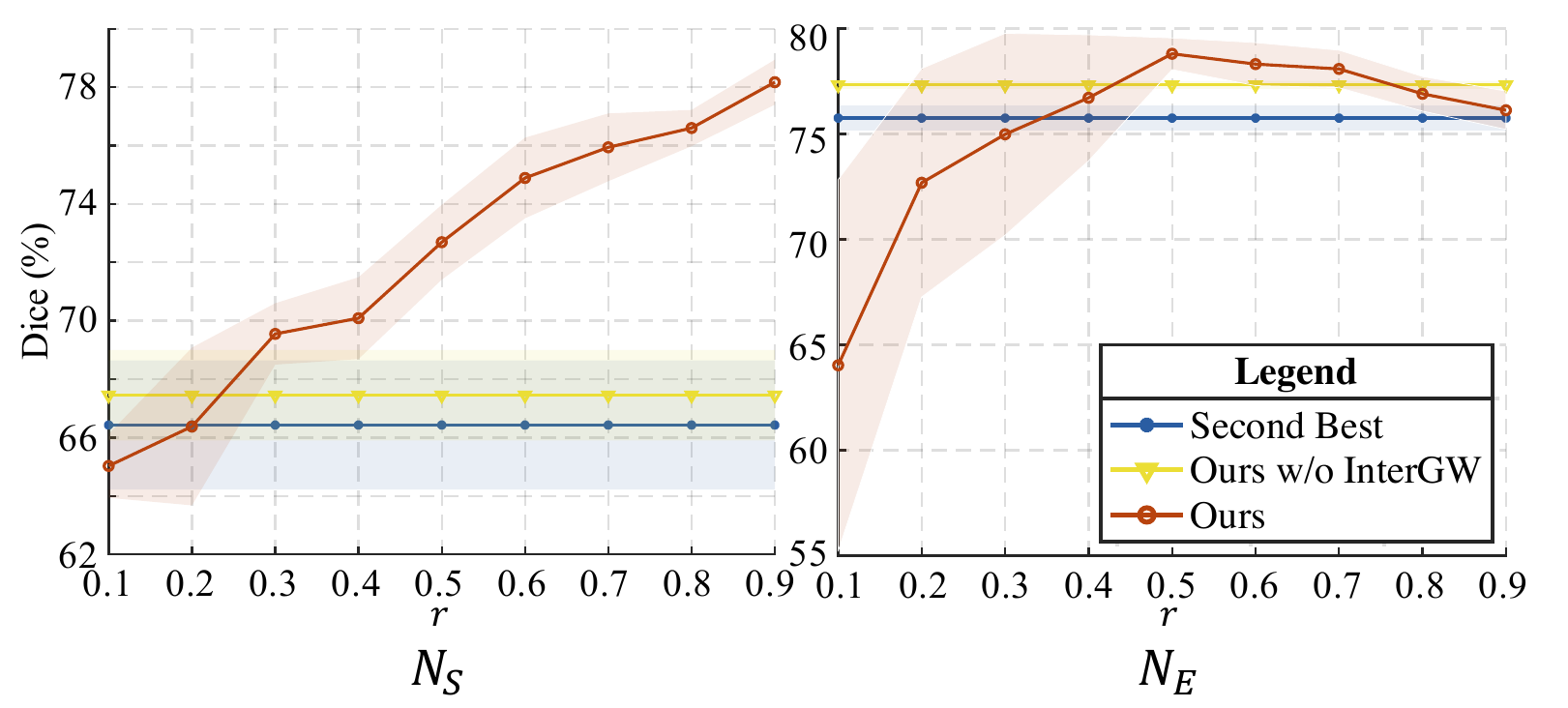}
	\caption{Ablation studies on the balance coefficient $r$ in InterGW. The solid line and transparent areas represent the mean and standard deviation respectively. \textit{Second best} means the performance of the second-best method in Tab. \ref{tab:sota}.}
	\label{fig:r}
\end{figure}

\subsubsection{Component-wise Ablation Study}

IntraGW and InterGW are added to the quantity-based aggregation strategy (FedAvg) sequentially to validate whether quality-based factors can help aggregation with underlying noisy clients/models. Experiments are conducted following the settings in Sec. \ref{sec:compare}, and quantitative results are summarized in Tab. \ref{tab:ablation1}. FedAvg struggles with noisy annotations, as it measures the contribution of each client only based on its data amount, making the global model biased towards clients with large amounts of noisy data. To mitigate this, IntraGW is introduced to up-weight the clients with higher-quality annotations, leading to a performance boost. Further introducing InterGW to balance the two client groups $\mathcal{G}_l$ and $\mathcal{G}_s$ would help the model learn general knowledge instead of over-fitting to the majority noise, achieving the best performance.

\subsubsection{Impact of the Balance Coefficient $r$ in InterGW}

In InterGW, $r$, set as 0.5 by default, is to balance the two client groups $\mathcal{G}_l$ and $\mathcal{G}_s$. To evaluate the impact of $r$, ablation studies under various settings of $r$ are conducted on \textit{SKIN} with noise settings stated in Tab \ref{tab:noisesetting}, and quantitative results are illustrated in Fig. \ref{fig:r}. Comparison results indicate that $r$ is a noise-related parameter, and InterGW can even fail without a proper setting. It is because $r$ is to re-weight $\mathcal{G}_l$ and $\mathcal{G}_s$, and the optimal weighting strategy can vary under different noise settings (\textit{e.g.} $N_S$ and $N_E$). Fortunately, using the default value of 0.5 brings a stable boost for InterGW, leading to better performance than the second-best SOTA method.

\begin{figure}[!t] 
	\centering
	\includegraphics[width=\columnwidth]{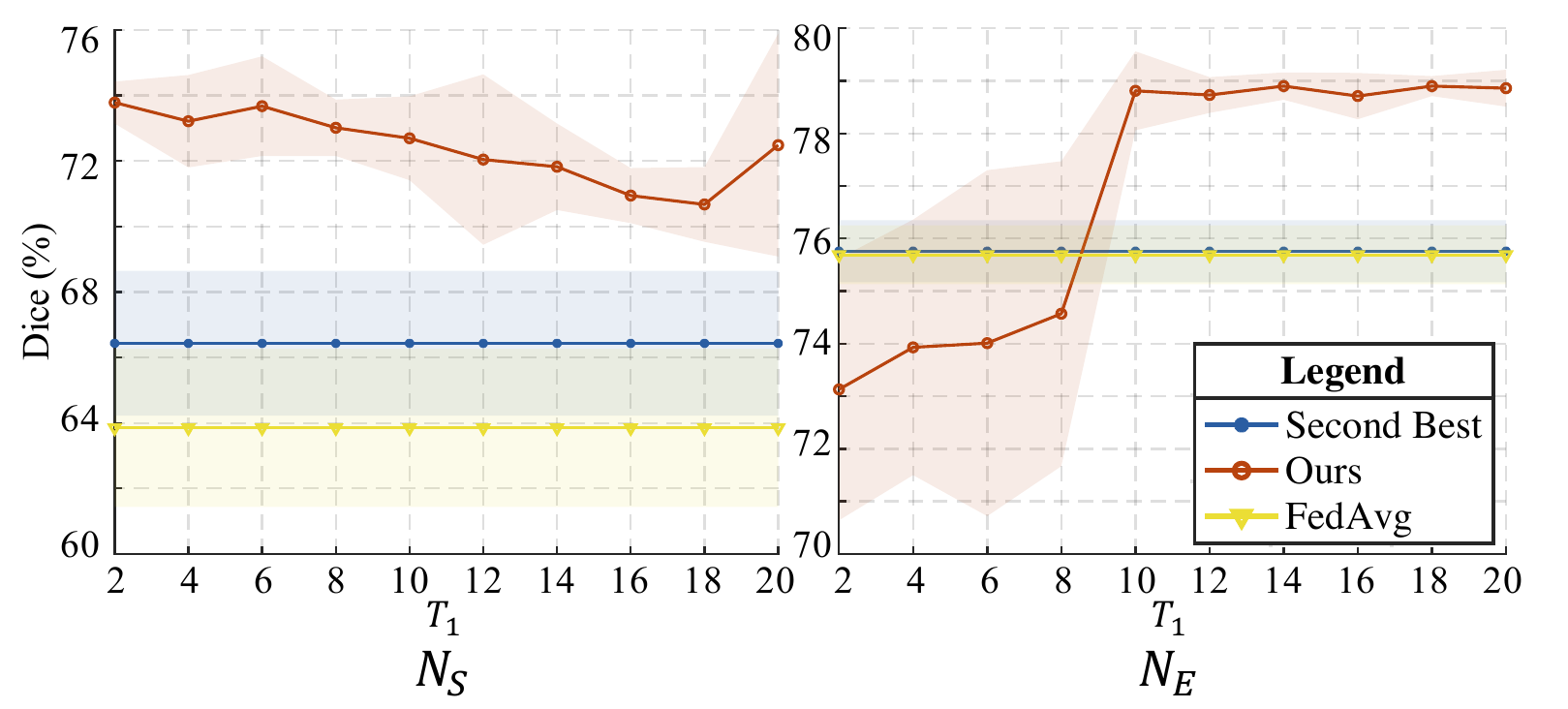}
	\caption{Ablation studies on the warm-up round $T_1$. The solid line and transparent areas represent the mean and standard deviation respectively. \textit{Second best} means the performance of the second-best method in Tab. \ref{tab:sota}.}
	\label{fig:rounds}
\end{figure}

\subsubsection{Impact of the Warm-up Round $T_1$}

In FedA$^3$I, we perform noise estimation with a warm-up global model trained by FedAvg. A natural question arises: how many training rounds are needed for warm-up training? Therefore, ablation studies under different settings of $T_1$ are conducted on \textit{SKIN} as illustrated in Fig \ref{fig:rounds}. FedA$^3$I stably outperforms FedAvg and other SOTA methods with $T_1$ in a certain range (\textit{i.e.} 10-20), demonstrating that FedA$^3$I is relatively robust to the selection of $T_1$. Hence, during the early stage of model learning (but not too early), $T_1$ can be chosen relatively arbitrarily to reduce the workload of hyper-parameter tuning.

\section{Conclusion}
This paper presents a novel FMIS problem: training a segmentation model with decentralized data containing heterogeneous cross-client annotation noise. To formulate this problem, we first propose a general model called CEM to generate annotator-specific noise for pixel-wise annotations, and then extend it to construct a heterogeneous cross-client noise model for multi-source data. For problem-solving, we propose FedA$^3$I for noise-robust FMIS. Specifically, a quality factor, calculated based on noise estimation, is incorporated into the model aggregation phase in a layer-wise manner to emphasize the contribution of high-quality clients. Experimental results on two publicly-available datasets demonstrate the superiority of FedA$^3$I against the state-of-the-art approaches. We believe that the presented problem, the noise model, and the solution FedA$^3$I will serve as valuable inspiration for future research in developing realistic medical image segmentation systems from decentralized data.

\newpage
\section*{Acknowledgments}
This work was supported in part by the National Natural Science Foundation of China under Grants 62202179 and 62271220 and in part by the Natural Science Foundation of Hubei Province of China under Grant 2022CFB585. The computation is supported by the HPC Platform of HUST. We also thank Dr. Zhenyu Liao for his assistance in mathematical proof.

\bibliography{aaai24}

\twocolumn
\newpage
\appendix
\onecolumn
\part*{\centering Appendix}

This appendix is organized as follows.
\begin{itemize}
	
	\item In Section \ref{sec:noise}, we discuss annotation noise further.
	
	\item In Section \ref{sec:experiment}, experimental details are presented.
	
\end{itemize}
Please note that the references cited in the appendix can be found in the main text. In addition, the tables and figures are also sequentially numbered along with the corresponding text to facilitate easy referencing.

\section{Noise} \label{sec:noise}
\subsection{Discussion about Pixel-Dependent Property}
We utilize Fig \ref{fig:example} as a toy example to illustrate the underlying motivation behind pixel-dependent noise (PDN) as defined in Definition \ref{PDN}. Note that segmenting of this circular object is achieved by delineating its contour, and noise exists in the pixels between the marked and the clean (\textit{i.e.}, ideal) contours. This observation indicates that noise is distributed in the vicinity of the contour, which corresponds to the first condition outlined in Definition \ref{PDN}. Additionally, Fig. \ref{fig:example} illustrates the challenge of delineating the upper half of the object due to the non-distinct boundary caused by continuous transitions, while the lower half is comparatively easier to delineate owing to its distinct boundary. This phenomenon leads to Non-IID noise along the tangential direction (\textit{i.e.}, noise is stronger near the boundary in the upper half), which is frequently observed in medical images due to the anisotropic nature of medical objects. The second condition in Definition \ref{PDN} captures this attribute.

However, existing noise models fail to comprehensively capture this Non-IID across-pixel noise. When extending label noise for the classification task like class-conditional noise (CCN) \cite{han2018co,yu2019does,chen2021beyond} to model annotation noise, the spatial information is disregarded since each pixel is treated as an independent classification instance. Consequently, neither condition is satisfied in this scenario. While morphological operation noise (MON) \cite{zhang2020disentangling,liu2022adaptive} and Markov label noise (MLN) \cite{yao2023learning} fit the first condition, they overlook the second condition due to the assumption of an identical noise distribution along the tangent direction (\textit{i.e.} the identical Bernoulli distribution).

Different from previous noise models, our presented contour evolution model (CEM) offers a stronger guarantee of pixel-dependent noise. Firstly, we introduce noise through a biased contour. As a result, the noise is inherently distributed around the boundary, satisfying the first condition. Secondly, the distribution of each element in the bias sequence $\langle b_k \rangle_{k=1}^l$ is not strictly identically distributed as Theorem \ref{th:non_IID_b}, resulting in Non-IID distortion along the tangential direction. This satisfies the second condition.

\begin{theorem} \label{th:non_IID_b}
	Assume there are $l_{sub}$ pairs $(u_1, v_1), (u_2, v_2), \dots, (u_{l_{sub}}, v_{l_{sub}})$, where $u_1, u_2, \dots, u_{l_{sub}}$ are differently selected from $\{1, 2, \dots, l\} (l_{sub} < l)$ with a determined strategy, and $v_1, v_2, \dots, v_{l_{sub}}$ \resizebox{!}{2ex}{$\stackrel{\text{i.i.d.}}{\sim}$} $\bm{N}(\mu,\sigma^2)$. These $l_{sub}$ pairs are used to fit a $p$-th ($p \leq l_{sub}-1$) degree polynomial function $\mathscr{P}(u)=a_pu^p+a_{p-1}u^{p-1}+\dots+a_1u+a_0$ by minimize the mean square error. We have $\mathscr{P}(1), \mathscr{P}(2), \dots, \mathscr{P}(l)$ are not strictly identically distributed.
\end{theorem}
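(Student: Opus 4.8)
The plan is to observe that, once the deterministic node set is fixed, each value $\mathscr{P}(x)$ is a fixed linear functional of the Gaussian data vector, so all of $\mathscr{P}(1),\dots,\mathscr{P}(l)$ are univariate Gaussian, and I would then compare them through their first two moments. Writing $\bm\phi(x)=(1,x,\dots,x^p)^\top$ and letting $\bm\Phi$ be the $l_{sub}\times(p+1)$ design matrix whose $k$-th row is $\bm\phi(u_k)^\top$, the least-squares fit gives $\mathscr{P}(x)=\bm\phi(x)^\top(\bm\Phi^\top\bm\Phi)^{-1}\bm\Phi^\top\bm v$ with $\bm v=(v_1,\dots,v_{l_{sub}})^\top$. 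Since the nodes are distinct and $p\le l_{sub}-1$, $\bm\Phi$ has full column rank $p+1$, so $\bm\Phi^\top\bm\Phi$ is invertible and each $\mathscr{P}(x)$ is a well-defined linear image of $\bm v\sim\bm N(\mu\bm 1,\sigma^2 I)$, hence Gaussian.

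First I would show that the means all coincide, which is precisely why the statement cannot be argued through means. By linearity of the fitting map and because the constant function lies in the column span of $\bm\Phi$ (the basis contains $x^0$), fitting the constant data $\mu\bm 1$ returns the constant polynomial $\mu$; therefore $\mathbb E[\mathscr{P}(x)]=\mu$ for every $x$. Consequently $\mathscr{P}(j_1)$ and $\mathscr{P}(j_2)$ are identically distributed if and only if they share the same variance. A direct computation using $\mathrm{Cov}(\bm v)=\sigma^2 I$ and $(\bm\Phi^\top\bm\Phi)^{-1}(\bm\Phi^\top\bm\Phi)(\bm\Phi^\top\bm\Phi)^{-1}=(\bm\Phi^\top\bm\Phi)^{-1}$ gives $\mathrm{Var}(\mathscr{P}(x))=\sigma^2\,h(x)$, where $h(x):=\bm\phi(x)^\top(\bm\Phi^\top\bm\Phi)^{-1}\bm\phi(x)$. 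So the entire claim reduces to showing that $h$ is not constant on the grid $\{1,\dots,l\}$.

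The key step is then to analyse $h$ as a polynomial in $x$. Setting $M:=(\bm\Phi^\top\bm\Phi)^{-1}$, which is symmetric positive definite, I would expand $h(x)=\sum_{0\le i,j\le p}M_{ij}\,x^{i+j}$; the unique term of degree $2p$ is $M_{pp}x^{2p}$, and $M_{pp}>0$ because a diagonal entry of a positive-definite matrix is positive. Hence $h$ is a real polynomial of degree exactly $2p$, and for $p\ge1$ (which the statement implicitly requires, since $p=0$ forces $\mathscr{P}\equiv\bar v$) it is non-constant. In the regime relevant to contours, where the number of pixels satisfies $l>2p$, non-constancy on the grid is immediate: if $h\equiv C$ on $\{1,\dots,l\}$, then $h-C$ would have $l>2p$ roots, forcing $h\equiv C$ and contradicting $\deg h=2p\ge2$. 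Thus some $h(j_1)\ne h(j_2)$, so the two Gaussians differ in variance, and the values are not strictly identically distributed.

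The main obstacle is twofold. Conceptually, the delicate point is recognising that all means collapse to $\mu$, so the non-identical-distribution phenomenon is carried entirely by the variance profile $h$—a leverage-type quadratic form—rather than by the means. Technically, the clean root-counting argument covers the realistic case $l>2p$, but the borderline regime $l\le 2p$ (possible only for $p\ge2$) needs a finer argument: there I would exploit the leverage-score structure at the nodes, namely $\sum_{k}h(u_k)=\mathrm{tr}\big(\bm\Phi(\bm\Phi^\top\bm\Phi)^{-1}\bm\Phi^\top\big)=p+1$ with each $h(u_k)\in(0,1]$, combined with the existence of at least one non-node point (since $l_{sub}<l$), or else invoke the equal-interval selection strategy to guarantee $l>2p$ outright. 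I expect verifying that this corner case cannot yield a $h$ that is constant on the grid to be the part requiring the most care.
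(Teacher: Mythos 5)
Your proposal is correct in its main line and, while it starts from the same setup as the paper (the least-squares fit as a deterministic linear map applied to the Gaussian data vector, well-defined because distinct nodes give a full-rank design matrix --- the paper phrases this via the Haar condition and normal equations), it goes decisively further, and in doing so it exposes two genuine defects in the paper's own proof. The paper writes $\mathscr{P}(j)=\sum_i \beta_{ji}v_i$ and concludes $\mathscr{P}(j)\sim \bm{N}\bigl(\sum_i\beta_{ji}\mu,\ \sum_i\beta_{ji}\sigma^2\bigr)$, then stops, asserting non-identical distribution. First, that variance expression is wrong: it should be $\sigma^2\sum_i\beta_{ji}^2$. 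Second, and more substantively, the paper never verifies that the stated distributions actually differ across $j$ --- and this is exactly the nontrivial point, because (as you observe) constant reproduction of the fitting map forces $\sum_i\beta_{ji}=1$ for every $j$, so all means collapse to $\mu$; ironically, plugging this into the paper's erroneous variance formula would make all the distributions equal to $\bm{N}(\mu,\sigma^2)$, contradicting its own conclusion. Your reduction of the claim to non-constancy of the leverage function $h(x)=\bm\phi(x)^\top(\bm\Phi^\top\bm\Phi)^{-1}\bm\phi(x)$, together with the observation that $h$ is a polynomial of degree exactly $2p$ (positive diagonal entry $M_{pp}$) killed by root counting on a grid of $l>2p$ points, is precisely the missing lemma, and your remark that the theorem implicitly needs $p\ge 1$ (for $p=0$ the fitted values are literally the same random variable $\bar v$, so the statement fails) is a correct caveat the paper overlooks.

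One honest gap remains on your side, which you flag yourself: the hypotheses as stated ($p\le l_{sub}-1$, $l_{sub}<l$) do not rule out $l\le 2p$ when $p\ge 2$, and there your root-counting argument is silent; the trace identity $\sum_k h(u_k)=p+1$ with $h(u_k)\in(0,1]$ is suggestive but, as written, does not by itself produce a contradiction (e.g., in the interpolation case $l_{sub}=p+1$ all node leverages equal $1$, and one must separately exclude $h=1$ at a non-node grid point, say by factoring $h-1$ through $\prod_m(x-u_m)$ and counting roots of the degree-$(p-1)$ cofactor). Appealing to the paper's actual regime --- equal-interval subsampling of many contour pixels with small fixed $p$, so $l>2p$ --- is a legitimate way to close the case the theorem is really used in, but a fully general proof of the statement as written would need the corner case finished. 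Even so, your argument is strictly more complete than the one the paper provides.
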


\begin{proof}[\textbf{Proof}]\renewcommand{\qedsymbol}{}
	The basis $\{1, u, \dots, u^p\}$ satisfy the \textbf{Haar condition} in the set $\{u_1, u_2, \dots, u_{l_{sub}}\}$. Hence, according to interpolation theory, the function $\mathscr{P}(\cdot)$ is unique and satisfying:
	$$\frac{\partial \sum_{i=1}^{l_{sub}} (\sum_{j=0}^{p}a_j u_i^j -v_i)^2}{\partial a_k} = 0, k \in \{0, 1, \dots, p\}.$$
	We further have:
	$$\sum_{j=0}^{p}a_j \sum_{i=1}^{l_{sub}} u_i^{k+j} = \sum_{i=1}^{l_{sub}} u_i^k v_i, k \in \{0, 1, \dots, p\}.$$
	In matrix form, we have:
	$$\left[\begin{array}{cccc}
		l_{sub} & \sum_{i=1}^{l_{sub}} u_{i} & \cdots & \sum_{i=1}^{l_{sub}} u_{i}^{p} \\
		\sum_{i=1}^{l_{sub}} u_{i} & \sum_{i=1}^{l_{sub}} u_{i}^{2} & \cdots & \sum_{i=1}^{l_{sub}} u_{i}^{p+1} \\
		\vdots & \vdots & & \vdots \\
		\sum_{i=1}^{l_{sub}} u_{i}^{p} & \sum_{i=1}^{l_{sub}} u_{i}^{p+1} & \cdots & \sum_{i=1}^{l_{sub}} u_{i}^{2p}
	\end{array}\right]\left[\begin{array}{c}
		a_{0} \\
		a_{1} \\
		\vdots \\
		a_{p}
	\end{array}\right]=\left[\begin{array}{c}
		\sum_{i=1}^{l_{sub}} v_{i} \\
		\sum_{i=1}^{l_{sub}} u_{i} v_{i} \\
		\vdots \\
		\sum_{i=1}^{l_{sub}} u_{i}^{p} v_{i}
	\end{array}\right]$$
	The matrix is determined because the selection strategy is determined. In addition, \textbf{Haar condition} ensure it is non-singular. Based on this, we have:
	$$\left[\begin{array}{c}
		a_{0} \\
		a_{1} \\
		\vdots \\
		a_{p}
	\end{array}\right]=
	\left[\begin{array}{cccc}
		l_{sub} & \sum_{i=1}^{l_{sub}} u_{i} & \cdots & \sum_{i=1}^{l_{sub}} u_{i}^{p} \\
		\sum_{i=1}^{l_{sub}} u_{i} & \sum_{i=1}^{l_{sub}} u_{i}^{2} & \cdots & \sum_{i=1}^{l_{sub}} u_{i}^{p+1} \\
		\vdots & \vdots & & \vdots \\
		\sum_{i=1}^{l_{sub}} u_{i}^{p} & \sum_{i=1}^{l_{sub}} u_{i}^{p+1} & \cdots & \sum_{i=1}^{l_{sub}} u_{i}^{2p}
	\end{array}\right]^{-1}\left[\begin{array}{c}
		\sum_{i=1}^{l_{sub}} v_{i} \\
		\sum_{i=1}^{l_{sub}} u_{i} v_{i} \\
		\vdots \\
		\sum_{i=1}^{l_{sub}} u_{i}^{p} v_{i}
	\end{array}\right]$$
	To this end, $\{a_0, \dots, a_p\}$ can be seen as different linear combinations of $\{v_1, \dots, v_{l_{sub}}\}$. Let's assume:
	\begin{align*}
		a_0 = \alpha_{01} v_1 + \alpha_{02} v_2 + \dots + \alpha_{0l_{sub}} v_{l_{sub}},\\
		a_1 = \alpha_{11} v_1 + \alpha_{12} v_2 + \dots + \alpha_{1l_{sub}} v_{l_{sub}},\\
		\vdots \quad \quad \quad \quad \quad \quad \quad \quad \\
		a_p = \alpha_{p1} v_1 + \alpha_{p2} v_2 + \dots + \alpha_{pl_{sub}} v_{l_{sub}}.
	\end{align*}
	We further denote:
	\begin{align*}
		\mathscr{P}(1) = \sum_{j=0}^{p} a_j 1^j = \beta_{11} v_1 + \beta_{12} v_2 + \dots + \beta_{1l_{sub}} v_{l_{sub}},\\
		\mathscr{P}(2) = \sum_{j=0}^{p} a_j 2^j = \beta_{21} v_1 + \beta_{22} v_2 + \dots + \beta_{2l_{sub}} v_{l_{sub}},\\
		\vdots \quad \quad \quad \quad \quad \quad \quad \quad \quad \quad \quad \\
		\mathscr{P}(l) = \sum_{j=0}^{p} a_j l^j = \beta_{l1} v_1 + \beta_{l2} v_2 + \dots + \beta_{ll_{sub}} v_{l_{sub}}.\\
	\end{align*}
	We have,
	$$\mathscr{P}(j) \sim \bm{N}(\sum_{i=1}^{l_{sub}}\beta_{ji} \mu, \sum_{i=1}^{l_{sub}}\beta_{ji} \sigma^2), j \in \{1, 2, \dots, l\}$$
	Therefore, $\mathscr{P}(1), \mathscr{P}(2), \dots, \mathscr{P}(l)$ are not strictly identically distributed.
\end{proof}

\begin{figure}[!t] 
	\centering
	\includegraphics[width=0.3\columnwidth]{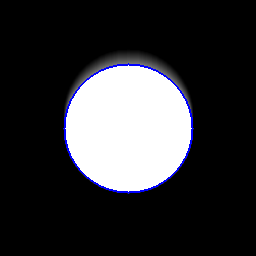}
	\caption{A circular object and its boundary (\textit{i.e.}, the blue curve).}
	\label{fig:example}
\end{figure}

\subsection{Visualization of CEM}
Noise generated by six CEMs are visualized as Fig. \ref{fig:v_cem} to show the effectiveness. The two parameters control the bias of the generated contour, leading to different noise strength and patterns in annotations.

\begin{figure}[!t] 
	\centering
	\subfigure[$C(-5, 1)$]{\includegraphics[width=0.31\columnwidth]{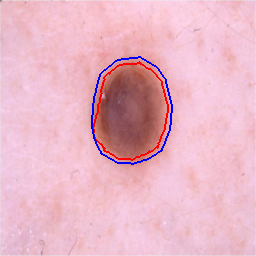}}
	\subfigure[$C(-7, 2)$]{\includegraphics[width=0.31\columnwidth]{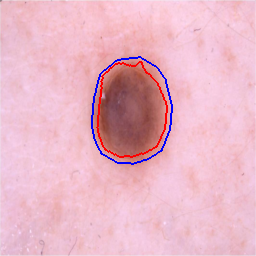}}
	\subfigure[$C(-10, 2)$]{\includegraphics[width=0.31\columnwidth]{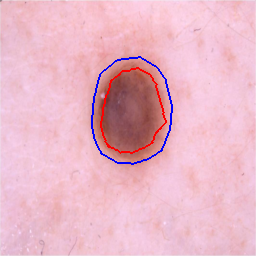}}
	\subfigure[$C(5, 2.5)$]{\includegraphics[width=0.31\columnwidth]{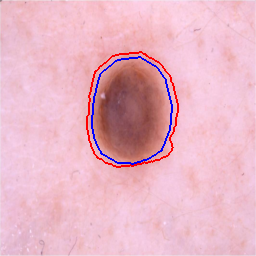}}
	\subfigure[$C(10, 4)$]{\includegraphics[width=0.31\columnwidth]{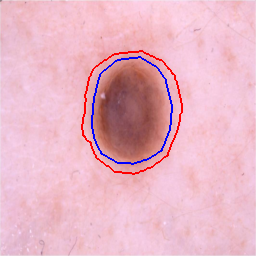}}
	\subfigure[$C(15, 5)$]{\includegraphics[width=0.31\columnwidth]{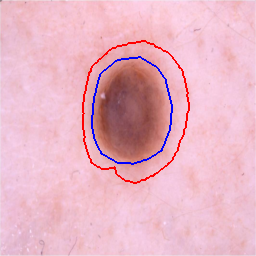}}
	\caption{Visualization of noise generated by different CEMs. The blue and red curves represent the contours of clean (\textit{i.e.}, ideal) and noisy annotations, respectively.}
	\label{fig:v_cem}
\end{figure}

\section{Experiments} \label{sec:experiment}

\subsection{Details of SOTA Methods for Comparison}
Introduction and implementation details of methods for comparison are given as follows.

\begin{itemize}
	
	\item GCE \cite{zhang2018generalized}: Generalized cross entropy (GCE) loss can be seen as a trade-off between cross entropy loss (CE) and mean absolute error loss. In all experiments, the hyper-parameter $q$ in GCE is set as 0.7 following the original paper.
	
	\item SCE \cite{wang2019symmetric}: Symmetric cross entropy (SCE) loss combines the reverse CE loss with the vanilla CE loss for robust learning with noisy labels. We set $\alpha=0.5$ and $\beta=1$ in SCE following \cite{yao2023learning} in experiments.
	
	\item ELR \cite{liu2020early}: Early learning regularization (ELR) is a technique that can prevent models from fitting to noisy labels during the training process. Following the paper, we set hyper-parameters $\beta=0.99$ and $\lambda=1$.
	
	\item ADELE \cite{liu2022adaptive}: Adaptive early-learning correction can handle IID annotation noise, but it may experience a decrease in performance when the noise is heterogeneous. In the context of FL, the IoU metric is computed on the global training dataset for label correction. Following the original paper, we set the hyper-parameters $\lambda$, $\rho$, $r$, and $\tau$ to 1, 0.8, 0.9, and 0.8, respectively, for both datasets in experiments. 
	
	\item RMD \cite{fang2023reliable}: Reliable mutual distillation (RMD) is a solution to address annotation noise for medical images. $t_1$, $\mu$ and $\tau$ are set as 10, 0.8 and 0.005, respectively, following the paper.
	
	\item NR-Dice \cite{wang2020noise}: NR-Dice loss is a special Dice loss to learn from noisy pixel-wise annotations. We set the hyper-parameter $\gamma$ as 1.5 according to the original paper.
	
	\item FedProx \cite{FedProx}: The local proximal regularization term is proven to be robust to label noise in \cite{xu2022fedcorr}. Therefore, FedProx is selected as a baseline for comparison. The weight of the proximal term $\mu_{Prox}$ is tuned from $\{0.01, 0.001\}$.
	
	\item FedMix \cite{wicaksana2022fedmix}: FedMix focuses on the mixed-type annotations in FL. In the experiments, we treat noisy annotations as pixel-wise annotations instead of weak annotations when implementing FedMix. This is necessary because using weak annotations leads to training crashes. For hyper-parameters, following the orgion paper, we set $\lambda$ as 10, and $\beta$ as 2.5 and 1.5 for \textit{SKIN} and \textit{BREAST}, respectively.
	
	\item FedDM \cite{zhu2023feddm}: FedDM addresses typical bounding box annotations in FL. However, its strong assumption that the bounding box must completely contain the target area can make it invalid when dealing with generic annotation noise. We implement FedDM with CE loss for fair comparison.
	
	\item FedCorr \cite{xu2022fedcorr}: FedCorr is a multi-stage FL framework for the classification task with noisy labels. Due to the high cost of calculating LID score for the segmentation task, we replace LID score with loss value to identify noisy clients. $T_1$, $T_2$ and $T_3$ are set as 2, 8 and 90, respectively.
	
	\item FedNoRo \cite{wu2023fednoro}: FedNoRo is a two-stage FL framework for class-imbalanced noisy data. For fair comparison, we use vanilla CE loss for local training. The warm-up round is set following the paper.
	
\end{itemize}
To ensure a fair comparison, the same settings are used for each dataset that are not explicitly mentioned above. These shared settings include the backbone model, the communication round, the local epoch, the optimizer, \textit{etc}.

\end{document}